\newtheorem{theorem}{Theorem}
\newtheorem{definition}{Definition}
\newtheorem{lemma}[theorem]{Lemma}
\newcommand{\R}{\mathbb{R}}
\newcommand{\T}{\mathbb{T}}
\newcommand{\F}{\mathcal{F}}
\newcommand{\pipe}{:}
\newcommand{\NN}{{N\times N}}
\newcommand{\TZ}{T_A(N)}
\def\supplementfilename{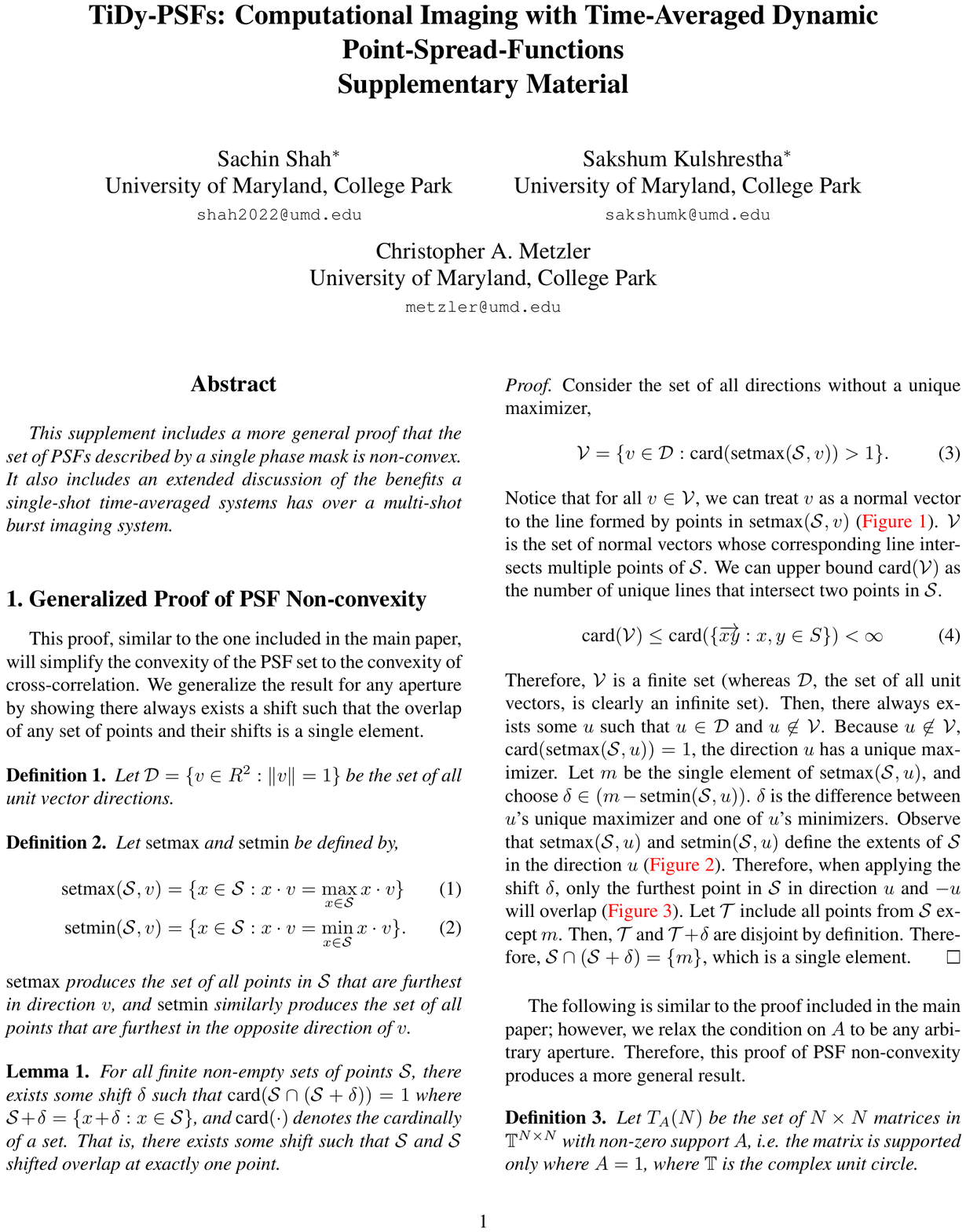}
\def\numbersupplementpages{\the\pdflastximagepages}
\begin{document}

\title{TiDy-PSFs: Computational Imaging with Time-Averaged Dynamic Point-Spread-Functions}

\author{Sachin Shah$^*$\\
University of Maryland, College Park\\
{\tt\small shah2022@umd.edu}
\and
Sakshum Kulshrestha$^*$\\
University of Maryland, College Park\\
{\tt\small sakshumk@umd.edu}
\and Christopher A. Metzler\\
University of Maryland, College Park\\
{\tt\small metzler@umd.edu}
}

\maketitle
\def\thefootnote{*}\footnotetext{These authors contributed equally to this work}\def\thefootnote{\arabic{footnote}}

\thispagestyle{empty}

\begin{abstract}
Point-spread-function (PSF) engineering is a powerful computational imaging techniques wherein a custom phase mask is integrated into an optical system to encode additional information into captured images. 
Used in combination with deep learning, such systems now offer state-of-the-art performance at monocular depth estimation, extended depth-of-field imaging, lensless imaging, and other tasks. 
 Inspired by recent advances in spatial light modulator (SLM) technology, this paper answers a natural question: Can one encode additional information and achieve superior performance by changing a phase mask dynamically over time? 
We first prove that the set of PSFs described by static phase masks is non-convex and that, as a result, time-averaged PSFs generated by dynamic phase masks are { fundamentally more expressive}. 
We then demonstrate, in simulation, that time-averaged dynamic (TiDy) phase masks can offer substantially improved monocular depth estimation and extended depth-of-field imaging performance.
\end{abstract}


\section{Introduction}
Extracting depth information from an image is a critical task across a range of applications including autonomous driving \cite{wang2019pseudo, AutoDriving}, robotics \cite{Sabnis:2011, Ye2017SelfSupervisedSL}, microscopy \cite{Fischer:2011, Palmieri:2019}, and augmented reality \cite{Woo2011DepthassistedR3, Lu:2019}.
To this end, researchers have developed engineered phase masks and apertures which serve to encode depth information into an image~\cite{CodedAperture, tetrapod}. 
To optimize these phase masks, recent works have exploited deep learning: By simultaneously optimizing a phase mask and a reconstruction algorithm ``end-to-end learning'' is able to dramatically improve system performance~\cite{phasecam, Sitzmann:2018}. 

\begin{figure}[t]
\begin{center}
   \includegraphics[width=0.95\linewidth]{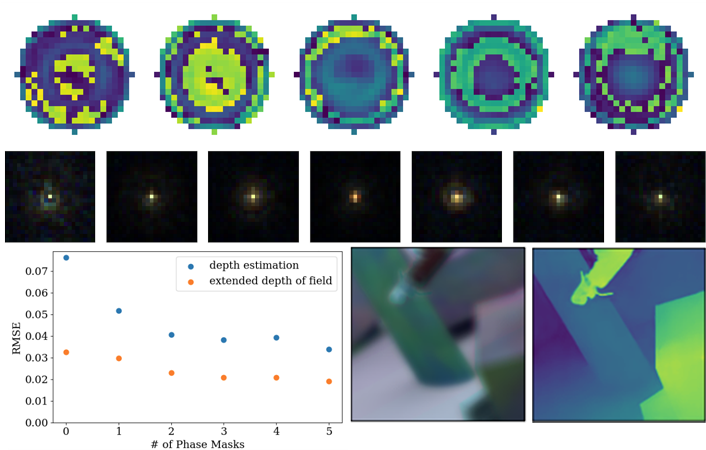}
\end{center}
   \caption{{\bf Time-averaged Dynamic PSFs} Top: Phase mask sequence that was optimized to perform simultaneous extended depth-of-field imaging and monocular depth estimation. Middle: Proposed TiDy PSFs at specific depths. Bottom left: Depth estimation and all-in-focus imaging performance improve as one averages over more phase masks. Bottom right: Depth-encoded image and reconstructed depth map.}
   \vspace{-5pt}
\label{fig:teaser}
\end{figure}

Most existing works have focused on learning or optimizing a single phase mask for passive depth perception. 
We conjecture that this restriction leaves much room for improvement. Perhaps by using an SLM to introduce a sequence of phase masks over time, one could do much better.

Supporting this idea is the fact, which we prove in \autoref{thm:psf-convex}, that the set of PSFs described by a single phase mask is non-convex. 
This implies that time-averaged PSFs, which span the convex hull of this set, can be significantly more expressive. 
In this work, we exploit the PSF non-convexity by developing a multi-phase mask end-to-end optimization approach for learning a sequence of phase masks whose PSFs are averaged over time.

This work's central contributions are as follows:
\begin{itemize}
    \item We prove the set of PSFs generated by a single phase mask, is non-convex.
    Thus, dynamic phase-masks offer a fundamentally larger design space.
    \item We extend the end-to-end learning optics and algorithm design framework to design a dynamic set of phase masks. 
    \item We demonstrate, in simulation, that time-averaged PSFs can achieve superior monocular depth estimation and extended depth-of-field imaging performance. 
\end{itemize}

\section{Background}
\paragraph{Image Formation Model.}
One can simulate the formation of an an image in a camera by discretizing an RGB image by depth, convolving each depth with it's the corresponding PSF, and compositing the outputs to form the signal on the sensor. This process can be represented by the equation
\begin{align}\label{eq:forward_model}
    I=\sum_{d=1}^D O_d \left( L * h_d  \right) ,
\end{align}
where $L$ represents all-in-focus image, $\{1, \cdots, D\}$ represent a set of discrete depth layers, $O_d$ is the occlusion mask at depth $d$, and the set $\{h_1, \cdots, h_D\}$ represent the depth-dependent PSF, i.e.,~the cameras response to point sources at various depths \cite{Hasinoff07}. Other works assume no depth discontinuities \cite{Sitzmann:2018} or add additional computation to improve blurring at depth boundaries \cite{Ikoma:2021}. Our model is similar to those used in \cite{phasecam, Chang:2019:DeepOptics3D}. 

\paragraph{PSF Formation Model.}
A PSF $h_d$ can be formed as a function of distance $d$ and phase modulation $\phi^{M}$ caused by height variation on a phase mask.
\begin{align}
    h_d=|\F[ A \exp( i \phi^{DF}(d) + i \phi^{M}  ) ]|^2
\end{align}
where $\phi^{DF}(d)$ is the defocus aberration due to the distance $d$ between the focus point and the depth plane. Note that because this PSF depends on depth, it can be used to encode depth information into $I$ \cite{Goodman:2017}.  \\

The key idea behind PSF-engineering and end-to-end learning is that one can use the aforementioned relationships to encode additional information into a captured image $I$ by selecting a particularly effective mask $\phi^M$.

\section{Related Work}

\subsection{Computational Optics for Depth Tasks}

Optics based approaches for depth estimation use sensors
and optical setups to encode and recover depth information.
Modern methods have used the depth-dependent blur caused by an aperture to estimate the depth of pixels in
an image. These approaches compare the
blur at different ranges to the expected blur caused by an aperture focused at
a fixed distance \cite{DepthFromDefocus}. Groups improved on this idea by implementing coded apertures, retaining more high frequency information about the scene to disambiguate depths \cite{CodedAperture}.
Similar to depth estimation tasks, static phase masks have been used to produce tailored PSFs more invariant to depth, allowing for extended depth-of-field imaging \cite{Dowski:1995}. However, these optically driven approaches have been passed in performance by modern deep neural networks, allowing for joint optimization of optical elements and neural reconstruction networks.

\subsection{Deep Optics}
Many methods have engineered phase masks with specific depth qualities. By maximizing Fisher information for depth, the coded image theoretically will have the most amount of depth cues as possible \cite{Fisher} and by minimizing Fisher information, one may achieve an extended depth-of-field image \cite{Dowski:1995}. Deep learning techniques can be used to jointly train the optical parameters and neural network based estimation methods. The idea is that one can ``code'' an image to retain additional information about a scene, and then use a deep neural network to produce reconstructions. By using a differentiable model for light propagation, back-propagation can be used to update phase mask values simultaneously with neural network parameters. This approach was demonstrated for extended depth-of-field imaging \cite{Sitzmann:2018, Ikoma:2021, Liu:22}, depth estimation \cite{phasecam, Chang:2019:DeepOptics3D, Ikoma:2021}, and holography \cite{Choi:2021, choi2022time}. While these previous approaches successfully improved performance, they focused on enhancing a single phase mask. We build on these works by simultaneously optimizing multiple phase masks, which allows us to search over a larger space of PSFs.

\section{Theory}
\label{sec:Thm}
Micro-ElectroMechanical SLMs offer high framerates but have limited phase precision due to heavy quantization \cite{Bartlett:2019}. As \cite{choi2022time} noted, intensity averaging of multiple frames can improve quality by increasing effective precision to overcome quantization. Our key insight is that even as SLM technology improves, intensity averaging yields a more expressive design space than a single phase mask. This is supported by the claim that the set of PSFs that can be generated by a single phase mask is non-convex. We provide a rigorous proof for the claim as follows. 

\begin{definition}\label{def:A}
$A \in \{0,1\}^\NN$ is some valid aperture with a non-zero region $S$ such that there exists lines $L_1$ and $L_2$ where $S$ can be contained between them, and $L_1 \parallel L_2$ and $u = S\cap L_1$ and $v = S\cap L_2$ are single points (\autoref{fig:aperture_shape}).
\end{definition}
This definition of $A$ supports most commonly used apertures including but not limited to circles, squares, and $n$-sided regular polygons. See supplement for proof for all shapes.
\begin{definition}
    Let $\TZ$ be the set of $\NN$ matrices in $\T^\NN$ with non-zero support $A$, i.e. the matrix is supported only where $A=1$, where $\T$ is the complex unit circle.
\end{definition}
The PSF induced by a phase mask $M$ can be modeled as the squared magnitude of the Fourier transform of the pupil function $f$ \cite{phasecam}.
\begin{definition}
    Let $f:\R^\NN\to T_A(N)$ be defined by
    \begin{align}f(M)=A\odot\exp(iD+icM)\end{align}
    where $\odot$ denotes entry-wise multiplication, and $D\in\R^\NN$ and $c\in\R-\{0\}$ (the reals except for $0$) are fixed constants. 
\end{definition}
\begin{definition}
    Let $g:\TZ\to\R^\NN$ be defined by
    \begin{align}
     g(X)=\frac{|\F(X)|\odot|\F(X)|}{\|\F(X)\|_{F}^2}
    \end{align}
    where $\F$ denotes the discrete Fourier Transform with sufficient zero-padding, $|\cdot|$ denotes entry-wise absolute value, and $\|\cdot\|_{F}$ denotes the Frobenius norm.
\end{definition}
\begin{lemma}
From fourier optics theory \cite{Goodman:2017}, any single phase mask's PSF at a specific depth can be written as
\[PSF = g\circ f.\]
\end{lemma}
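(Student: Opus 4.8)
The plan is to prove the identity by unfolding the two maps and matching each factor against the physical PSF formation model $h_d=|\F[A\exp(i\phi^{DF}(d)+i\phi^{M})]|^2$ stated earlier. Writing out the composition, $(g\circ f)(M)=g\big(A\odot\exp(iD+icM)\big)$, so the inner map $f$ must be recognized as the \emph{pupil function}: the aperture $A$ gates the field, $D$ encodes the fixed defocus aberration at the chosen depth, and $cM$ encodes the phase delay imparted by the mask. First I would fix the identifications $D=\phi^{DF}(d)$ and $\phi^{M}=cM$, where the constant $c$ absorbs the wavelength- and material-dependent conversion from mask height to optical phase, so that $f(M)$ is literally the pupil function appearing inside the Fourier transform above.

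Next I would invoke the standard Fraunhofer/Fourier-optics result cited as \cite{Goodman:2017}: the coherent amplitude PSF is the (suitably scaled) Fourier transform of the pupil function, and hence the intensity PSF is its entrywise squared magnitude, $|\F(f(M))|\odot|\F(f(M))|$. This is exactly the numerator of $g$. The zero-padding built into $\F$ is what makes the discrete transform approximate the continuous diffraction integral at the sensor sampling rate, so this step is where the discrete model is tied to the underlying physics.

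The remaining piece is the normalization by $\|\F(X)\|_F^2$ in the definition of $g$. I would justify this as energy conservation: a physical PSF redistributes but does not create photons, so its entries should sum to the total incident energy, conventionally $1$. Because the DFT is unitary up to a fixed scale, Parseval's theorem gives $\|\F(X)\|_F^2$ proportional to the energy $\|X\|_F^2$ passing the aperture, and dividing by it produces the energy-normalized intensity PSF. Thus $(g\circ f)(M)$ equals the normalized version of $h_d$, establishing $PSF=g\circ f$.

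The main obstacle is not a hard inequality but a modeling justification: one must argue that the finite, zero-padded DFT with the single scalar $c$ and fixed offset $D$ faithfully captures the continuous, wavelength-dependent Fourier-optics PSF, and that the Frobenius normalization is the correct—and only—freedom needed to match physical energy conservation. Pinning down the DFT scaling convention so that Parseval yields exactly the claimed normalization is the one place where care is required.
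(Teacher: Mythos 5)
Your proposal is correct and matches the paper's reasoning: the paper offers no proof of this lemma beyond the citation to Goodman, but its Background section gives exactly the identification you make ($f$ as the pupil function $A\exp(i\phi^{DF}(d)+i\phi^{M})$ with $D=\phi^{DF}(d)$ and $cM=\phi^{M}$, and $g$ as the squared-magnitude Fourier transform), so your write-up simply fills in the details the authors leave implicit. Your one added observation --- that the Frobenius normalization in $g$ is an energy normalization absent from the Background's formula for $h_d$, justified via Parseval --- is exactly the role that normalization plays later in the proof of Theorem~\ref{thm:psf-convex}, so it is consistent with the paper's intent.
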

\begin{figure}[t]
    \centering
    \includegraphics[width=0.7\linewidth]{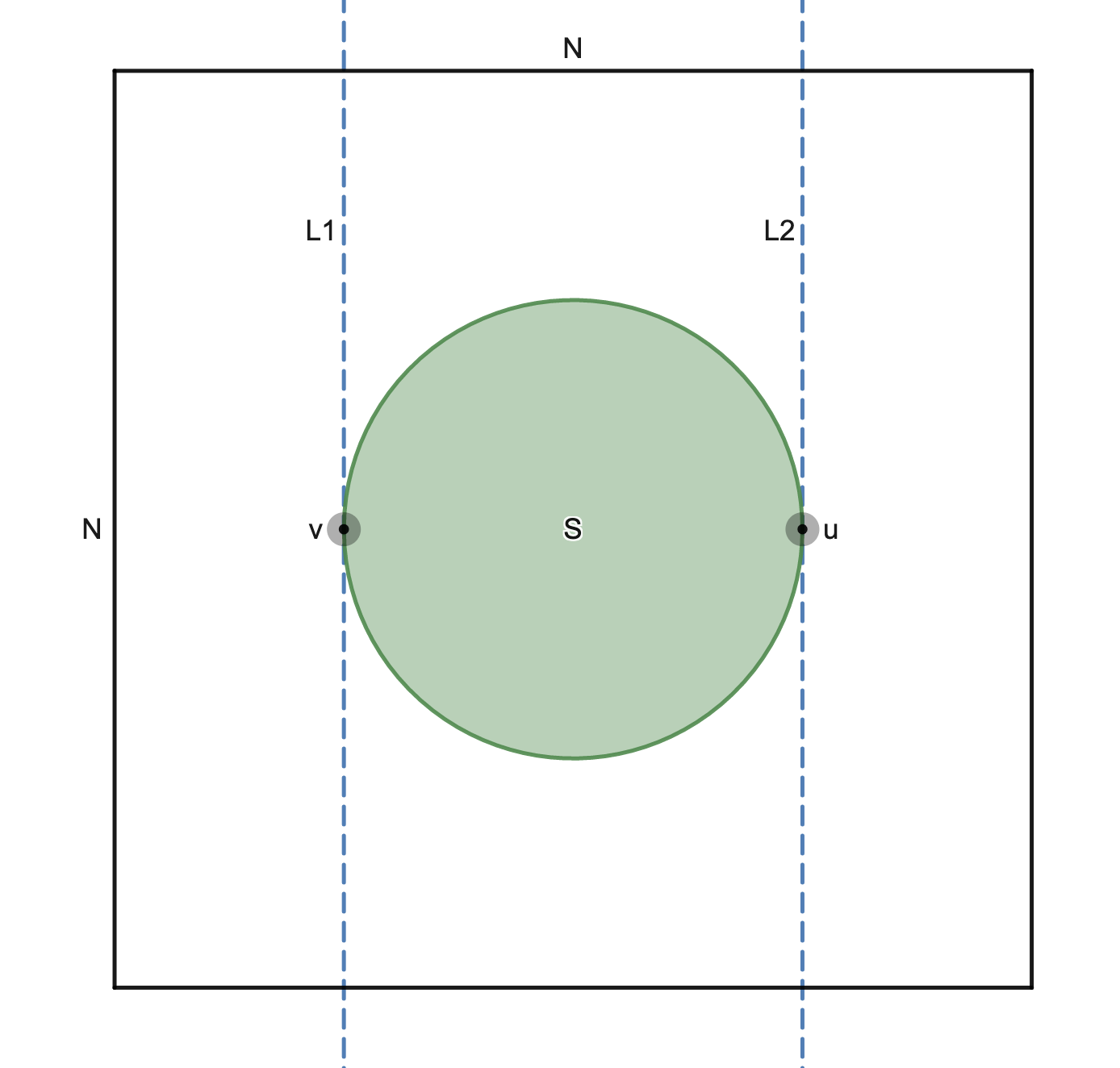}
    \caption{\textbf{Example aperture that satisfies constraints on $\mathbf{A}$.} The aperture is fitted between parallel lines $L1$ and $L2$, which only intersect the aperture at one point each. Common aperture shapes fit into these constraints. 
    }
    \label{fig:aperture_shape}
\end{figure}

\begin{theorem}\label{thm:psf-convex}
    The range of PSF is not a convex set.
\end{theorem}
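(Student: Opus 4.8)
The plan is to reduce the statement to a claim purely about $g$ and then exhibit an explicit non-realizable midpoint. Since $c\neq0$, the affine map $M\mapsto D+cM$ is a bijection of $\R^\NN$, so $f$ is surjective onto $\TZ$; hence the range of $\mathrm{PSF}=g\circ f$ equals $g(\TZ)$, and it suffices to show $g(\TZ)$ is not convex. A preliminary simplification: for $X\in\TZ$ every entry on the support has modulus one, so $\|X\|_F^2=|S|$ is independent of $X$, and by Parseval's theorem (valid because the zero-padding makes $\F$ a scaled isometry) the denominator $\|\F(X)\|_F^2$ is the same constant for every $X\in\TZ$. Thus $g(X)$ is just a fixed scalar multiple of $|\F(X)|^2$, and in particular every PSF in the range is a probability distribution (nonnegative entries summing to one).

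The key idea is to read the autocorrelation of $X$ off its PSF. By the correlation theorem, $|\F(X)|^2=\F(R_X)$, where $R_X$ is the acyclic autocorrelation $R_X(s)=\sum_a X_a\overline{X_{a-s}}$; the ``sufficient zero-padding'' is precisely what guarantees that $\F$ computes the acyclic rather than the cyclic autocorrelation. Combined with the constant normalization above, $R_X$ is a fixed scalar multiple of $\F^{-1}(g(X))$. Consequently, for each fixed lag $s$ the map $P\mapsto R_X(s)$ is a (complex) linear functional of the PSF $P=g(X)$. I will apply this at the extreme lag $s_0:=u-v$ and denote the resulting linear functional by $\ell$, so that $\ell(g(X))$ is a fixed multiple of $R_X(u-v)$.

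The heart of the argument — and the step I expect to be the main obstacle — is a geometric lemma showing this extreme lag is attained by a unique pair of support points. Let $n$ be the common normal of $L_1,L_2$ oriented from $L_2$ toward $L_1$, and write $d_1=\langle u,n\rangle$, $d_2=\langle v,n\rangle$. Because $S$ lies between the lines, every support point $a$ satisfies $d_2\le\langle a,n\rangle\le d_1$, with the upper bound attained only at $u$ by the single-point hypothesis. A term $X_a\overline{X_{a-s_0}}$ contributes to $R_X(u-v)$ only when both $a$ and $a-(u-v)$ lie in $S$; the second membership forces $\langle a,n\rangle\ge d_2+\langle u-v,n\rangle=d_1$, hence $\langle a,n\rangle=d_1$ and therefore $a=u$. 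The sum thus collapses to the single term $R_X(u-v)=X_u\overline{X_v}$, of modulus exactly one. It follows that $|\ell(P)|=c_0$ for a fixed constant $c_0>0$ and for every $P$ in the range.

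Finally I conclude by contradiction with convexity. Choosing the phases of $X$ freely, $X_u\overline{X_v}$ can be made equal to $+1$ or to $-1$; let $P_1,P_2\in g(\TZ)$ realize $\ell(P_1)=c_0$ and $\ell(P_2)=-c_0$. Since $\ell$ is linear, $\ell\!\left(\tfrac12 P_1+\tfrac12 P_2\right)=0$, whereas every element of the range satisfies $|\ell(\cdot)|=c_0\neq0$. Hence the midpoint of $P_1$ and $P_2$ is not a PSF, so $g(\TZ)$ — equivalently the range of $\mathrm{PSF}$ — is not convex.
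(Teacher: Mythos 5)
Your proposal is correct and follows essentially the same route as the paper's proof: reduce to the range of $g$ via surjectivity of $f$, use Parseval to make the normalization constant, pass to the autocorrelation via the correlation theorem, and show that at the extremal lag determined by $u$ and $v$ the autocorrelation collapses to the single unimodular term $X_u\overline{X_v}$, whose freedom over $\T$ contradicts convexity. Your final step (exhibiting $P_1,P_2$ with $\ell$-values $\pm c_0$ so the midpoint has $\ell=0$) just makes explicit the paper's appeal to the non-convexity of $\T$.
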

\begin{proof}
$f$ is clearly surjective, so it suffices to argue the range of $g$ is not convex. Assume by way of contradiction that the range of $g$ is convex. Then, for all $X^{(1)} ,\ldots, X^{(k)} \in \TZ$ there exists $Y \in \TZ$ such that $g(Y) = \frac{1}{k}\sum_{i=1}^k g(X^{(i)})$.
By Parseval's Theorem,
\begin{align}
    \|\F(X)\|_{F}^2 = N^2 \|X\|_F^2 = N^2 \sum_{i=0}^N\sum_{j=0}^N A_{i,j}
\end{align}
so the condition is
\begin{align}
    |\F(Y)|\odot |\F(Y)| = \frac{1}{k}\sum_{i=1}^k |\F(X^{(i)})|\odot |\F(X^{(i)})|
\end{align}
or equivalently
\begin{align}
    \F(Y)\odot\overline{\F(Y)} = \frac{1}{k} \sum_{i=1}^k \F(X^{(i)})\odot\overline{\F(X^{(i)})}.
\end{align}
Then the cross-correlation theorem reduces it to
\begin{align}
    \F(Y \star Y) = \frac1k \sum_{i=1}^k \F(X^{(i)} \star X^{(i)})
\end{align}
where $\star$ denotes cross-correlation. Because the Fourier Transform is linear we finally have
\begin{align}
    Y \star Y = \frac1k \sum_{i=1}^k X^{(i)}\star {X^{(i)}}.
\end{align}
Therefore, the convexity of the range of $g$ is equivalent to the convexity of the set $\{X\star X\pipe X\in \TZ\}$. We will show the set's projection onto a particular coordinate is not convex.
\begin{align}
(X\star X)_{s,r} =  \sum_{i=0}^N\sum_{j=0}^N X_{i,j} \overline{X_{i+s,j+r}}
\end{align}
where we adopt the convention that $X_{s,r}=0$ when $s,r>N$ or $s,r<0$. Take the points $u$ and $v$ from the definition of $A$ (\autoref{def:A}). Also observe that correlation can be represented geometrically as shifting $\overline{X}$ over $X$. In this representation, notice that as the shift $(s,r)$ approaches $v-u$, the non-zero overlap between $X$ and $\overline{X}$ shifted by $(s,r)$ approaches $1$ by construction. That is, when $L_1$ is shifted to overlap $L_2$, $u$ and $v$ will be the only non-zero overlaps between the shifted and original non-zero points (\autoref{fig:shift}). No other non-zero points can overlap above or below $L_2$ by definition of $S$. Therefore, $(X\star X)_{v-u}$ becomes
\begin{align}
X_u\overline{X_v} + \sum_{i=1}^{N^2-1} 0.
\end{align}
Because $X_u\overline{X_v} \in \T$, $(X\star X)_{v-u} \in\T$ which is a non-convex set. Therefore, the set of correlation's of values on the complex unit circle masked by $A$ is also not convex, and so is $PSF$.
\begin{figure}[t]
    \centering
    \includegraphics[width=0.7\linewidth]{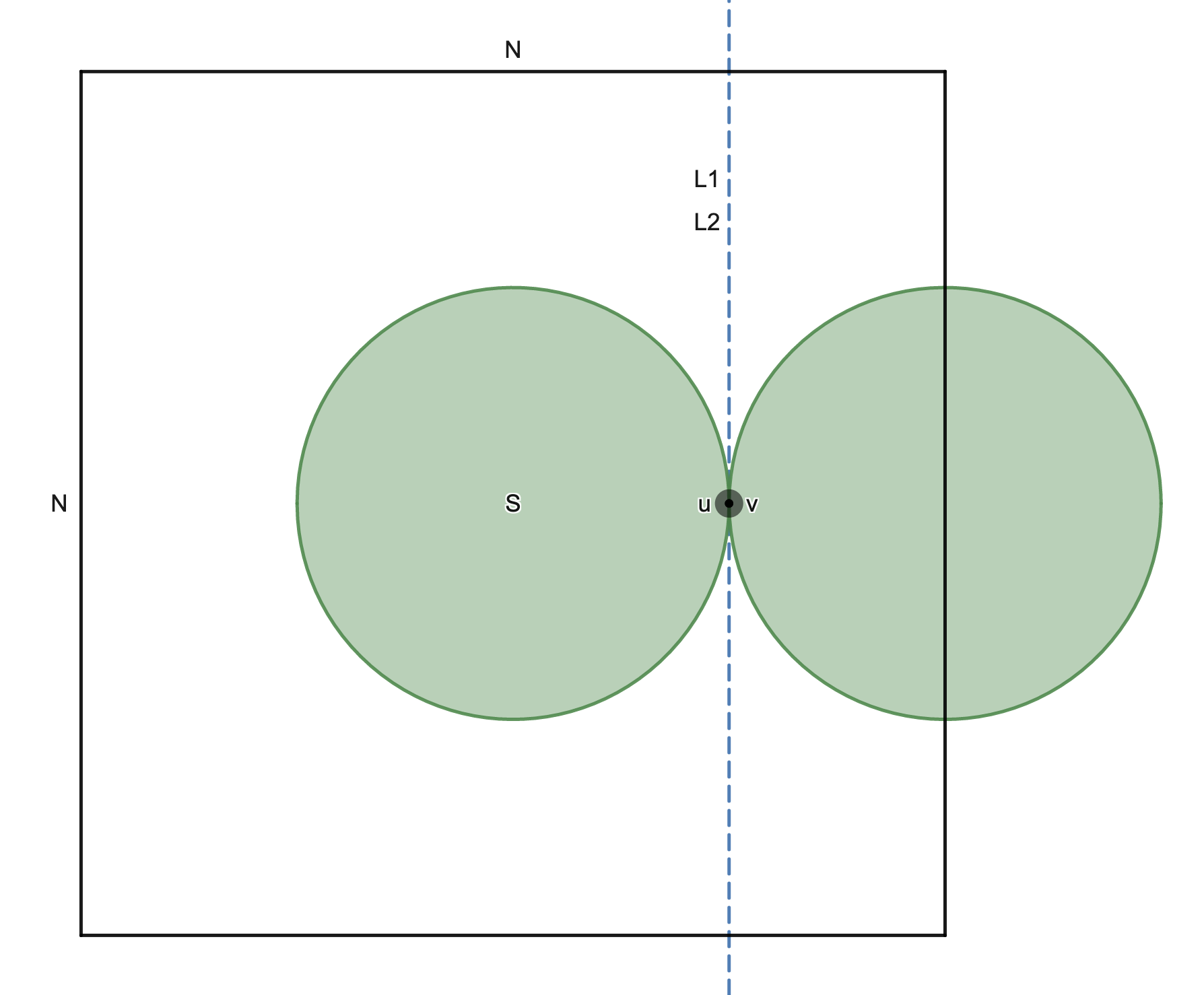}
    \caption{\textbf{Geometric interpretation of correlation $\mathbf{(X\star X)_{v-u}}$.} The figure represents the correlation step when the shift is $v-u$. Notice that only $u$ and $v$ overlap once the shift is applied.}
    \label{fig:shift}
\end{figure}
\end{proof}

Time-averaged PSFs span the convex hull of the set of static-mask PSFs, meaning there exists some PSFs achievable only through intensity averaging PSFs from a sequence of phase masks. This implies multi-phase mask learning may reach a better minimum.

\section{Multi-Phase Mask Optimization}
\begin{figure*}
\begin{center}
\includegraphics[width=0.9\linewidth]{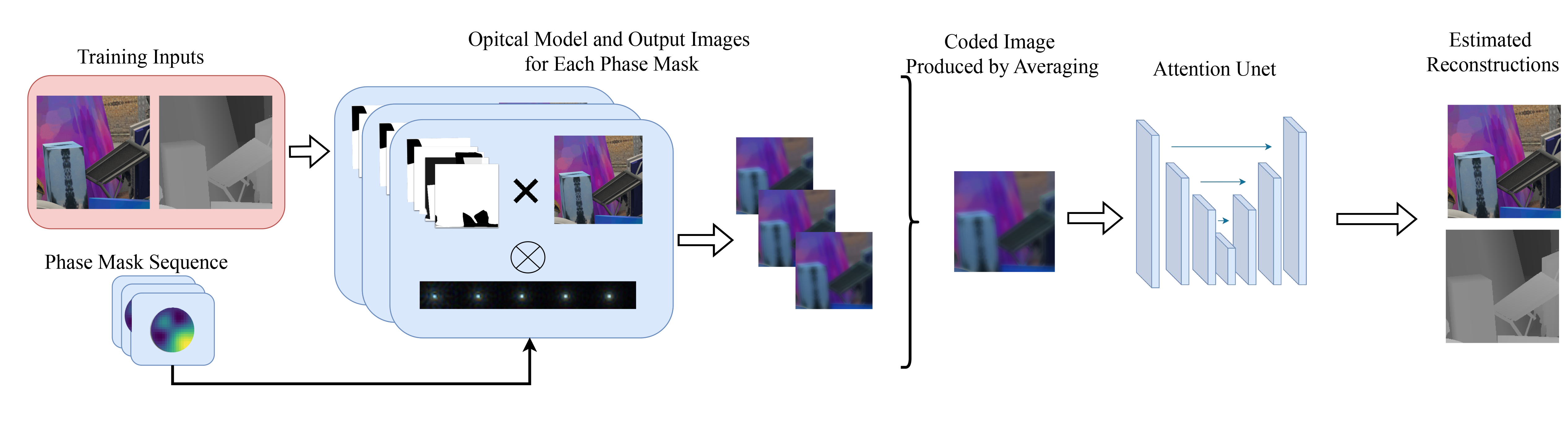}
\end{center}
   \caption{ \textbf{Multi-phase mask forward model overview.} A sequence of phase masks are used to generate a sequence of depth-dependent PSFs. These PSFs are convolved with depth masked clean images to simulate depth dependent convolution. The images produced by each phase mask are averaged to create a coded image which is fed into an attention U-Net. The reconstruction loss is back-propagated end-to-end through the network and the optical model to design phase masks and algorithms capable of performing monocular depth estimation and extended depth-of-field simultaneously.}
   \vspace{-15pt}
\label{fig:optical-model}
\end{figure*}

\subsection{Optical Forward Model}

Similar to PhaseCam3D \cite{phasecam}, we model light propagation using Fourier optics theory \cite{Goodman:2017}. In contrast to previous work, we compute the forward model \eqref{eq:forward_model} for multiple phase masks, producing a stack of output images, which when averaged form our coded image. This coded image simulates the recorded signal from imaging a scene using a sequence of phase masks in a single exposure (\autoref{fig:optical-model}).

\subsection{Specialized Networks}
For the monocular depth estimation task, we use the MiDaS Small network \cite{midas}. This is a well known convolutional monocular depth estimation network designed to take in natural images and output relative depth maps. The network is trained end-to-end with the phase masks. A mean-squared error (MSE) loss term is defined in terms of the depth reconstruction prediction, $\hat{D}$ and the ground truth depth map $D$,
\begin{align}
        L_{Depth} = \frac{1}{N} \lVert D - \hat{D} \rVert_2^2
\end{align}
where $N$ is the number of pixels.
This process allows for the simultaneous optimization of the phase masks as well as fine tuning MiDaS to reconstruct from our coded images. 

For the extended depth-of-field task, we use an Attention U-Net \cite{oktay2018attention} to reconstruct all-in-focus images. The network is optimized jointly with the phase mask sequence. To learn a reconstruction $\hat{I}$ to be similar to the all-in-focus ground truth image $I$, we define the loss term using MSE error
    \begin{align}
        L_{AiF} = \frac{1}{N} \lVert I - \hat{I} \rVert_2^2
    \end{align}
    where $N$ is the number of pixels.

\subsection{Joint Task Optimization}

We also present an alternative to the specialized networks: a single network jointly trained for monocular depth estimation and extended depth-of-field using a sequence of phase masks. This network has a basic Attention U-Net architecture outputting $4$ channels representing depth maps as well as all-in-focus images. Similar to prior works, we use a combined loss function, adding a coefficient to weight the losses for each individual task:
\begin{align}
    L_{total} = \lambda_{Depth}L_{Depth} + \lambda_{AiF}L_{AiF}.
\end{align}

\section{Experiments}

\subsection{Training Details}

We use the FlyingThings3D from Scene Flow Datasets~\cite{flyingthings}, which uses synthetic data generation to obtain all-in-focus RGB images and disparity maps. We use the cropped $278\times 278$ all-in-focus images from \cite{phasecam}. In total, we use $5077$ training patches and $419$ test patches.

Both the optical layer and reconstruction networks are differentiable, so the phase mask sequence and neural network can be optimized through back-propagation. Each part is implemented in PyTorch. During training, we use the Adam \cite{ADAM} optimizer with parameters $\beta_1=0.99$ and $\beta_2=0.999$. The learning rate for the phase masks is $10^{-8}$ and for the reconstruction network it is $10^{-4}$, and the batch size was $32$. Finally, training and testing were performed on NVIDIA Quadro P6000 GPUs.

We parameterize $23\times 23$ phase masks pixel-wise as \cite{Liu:22} found pixel-wise parameterization to produce the best overall performance. The monocular depth estimation task uses a the MiDaS Small architecture pretrained weights for monocular depth estimation downloadable from PyTorch \cite{midas}. The extended depth-of-field task pretrains an Attention U-Net with a fixed Fresnel lens for $300$ epochs. For the joint task, we set $\lambda_{Depth}=\lambda_{AiF}=1$ to balance overall performance, and we pretrain the Attention U-Net for $300$ epochs with a fixed Fresnel lens.
In simulation, the red, blue, and green channels are approximated by discretized wavelengths, $610$ nm, $530$ nm, and $470$ nm respectively. Additionally, the depth range is discretized into $21$ bins on the interval $[-20, 20]$, which is larger than previous works.

\subsection{Evaluation Details}
For ablation studies on our method, we used the testing split of the FlyingThings3D set for both monocular depth estimation and extended depth-of-field imaging \cite{flyingthings}. For comparisons to existing work, we also tested our monocular depth estimation network on the labeled NYU Depth v2 set \cite{NYU}. The ground truth depth maps were translated to layered masks for the clean images by bucketing the depth values into $21$ bins, allowing us to convolve each depth in an image with the required PSF. We use root mean squared error (RMSE) between ground truth and estimated depth maps for depth estimation and RMSE between ground truth and reconstructed all-in-focus images for extended depth-of-field imaging. We also use peak signal-to-noise ratio (PSNR) and structural similarity index~\cite{ssim} (SSIM) for extended depth-of-field imaging.

\subsection{Ablation Studies}

\subsubsection{Effect of Phase Mask Sequence Length}
\begin{figure}[t]
\begin{center}
\includegraphics[width=0.8\linewidth]{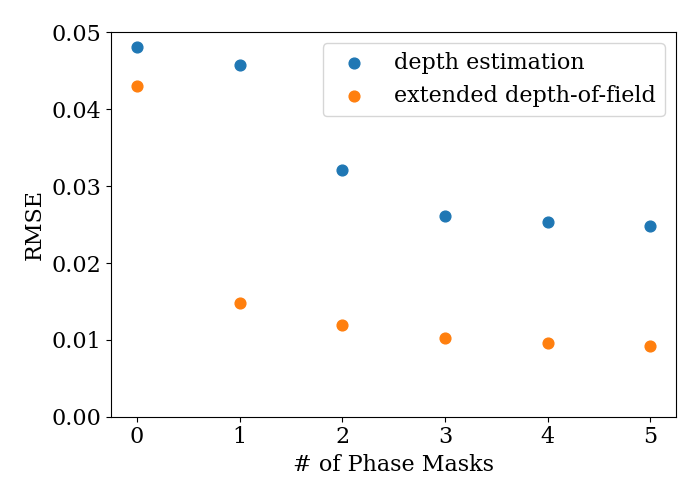}
\end{center}
   \caption{ \textbf{RMSE for specialized tasks} for each phase mask sequence length. RMSE decreases with respect to phase mask sequence length for both specialized extended depth-of-field imaging and monocular depth estimation tasks. $0$ phase masks refers to a reconstruction neural network with a fixed Fresnel lens.}
\label{fig:both-num-masks}
\end{figure}

\begin{figure}[t]
\begin{center}
\includegraphics[width=0.8\linewidth]{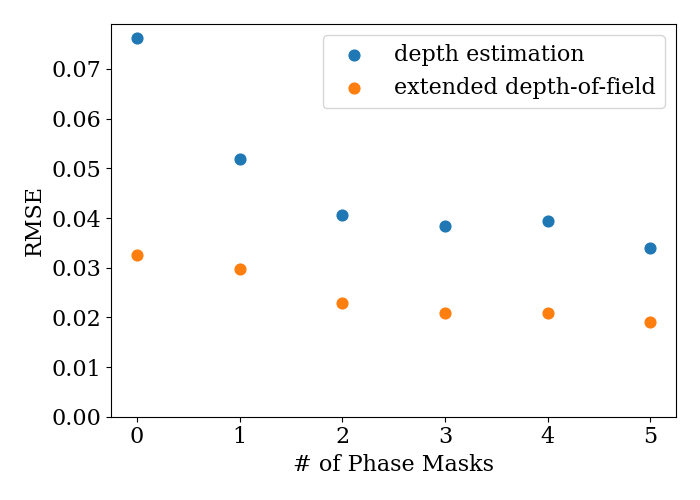}
\end{center}
   \caption{ \textbf{RMSE for joint optimization of monocular depth estimation and extended depth-of-field imaging} for each phase mask sequence length. RMSE decreases with respect to phase mask sequence length for this complex joint task, demonstrating the benefit of multi-phase mask learning. $0$ phase masks refers to a reconstruction neural network with a fixed Fresnel lens.}
\label{fig:combo-num-masks}
\end{figure}

For both all-in-focus imaging and depth estimation, we vary the phase mask count that the end-to-end system is trained with to gauge the benefits of using multiple phase masks. The forward model and initial phase masks were held standard while the phase mask count was varied. The resulting networks were evaluated at convergence. For the extended depth-of-field task, the masks were all initialized with random noise uniform from $0$ to $1.2 \times 10^{-6}$. For the depth estimation task, the masks were initialized with the Fisher mask with added Gaussian noise parameterized by a $5.35\times 10^{-7}$ mean and $3.05\times 10^{-7}$ standard deviation. 

End-to-end optimization on each task with a specialized network yielded improved performance as the phase mask count increased, visualized in \autoref{fig:both-num-masks}. This result implies that sequences of phase masks are successful in making the PSF space more expressive. Additionally, even for the more complex joint task, learning a system that can produce both all-in-focus images and depth maps, error decreases with phase mask count until a plateau, visualized in \autoref{fig:combo-num-masks}.

\subsubsection{All-in-focus without Reconstruction Networks}

\begin{figure}[t]
\begin{center}
\includegraphics[width=0.9\linewidth]{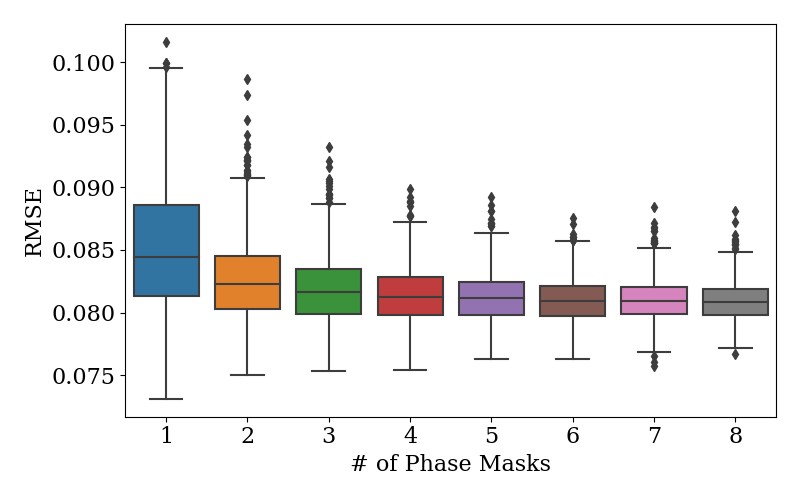}
\end{center}
   \caption{ \textbf{All-in-focus imaging RMSE distribution} for each phase mask length without a reconstruction network.  The best RMSE for each phase mask count has low correlation with respect to phase mask sequence length, but the variance of RMSE decreases.}
\label{fig:edof-no-recon}
\end{figure}

A phase mask generating a PSF of the unit impulse function at every depth would be ideal for extended depth-of-field as each depth is in focus. If possible, this phase mask would not require any digital processing. We optimize phase mask sequences of varying lengths to produce an averaged PSF close to the unit impulse function for all depths. For each sequence length, phase masks are optimized using MSE loss between the unit impulse function and the averaged PSF at each depth until convergence. We ran $1000$ trials of random phase mask initialization for each length. Observe that a side-effect of longer phase masks is training stability. The range of RMSE between the simulated capture image and ground truth all-in-focus image decreases as the sequence length increases (\autoref{fig:edof-no-recon}). This indicates training longer sequences is more resilient to initialization. 

\subsubsection{Phase Mask Initialization for Depth Perception}

\begin{figure}[t]
\begin{center}
\includegraphics[width=0.8\linewidth]{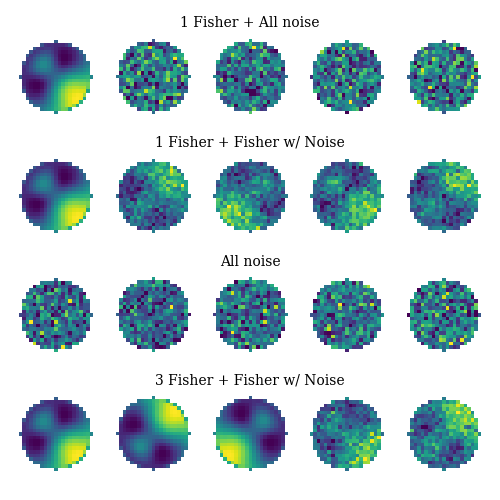}
\end{center}
   \caption{\textbf{Visualization of phase mask initializations.} Each row represents a different initial phase mask sequence.}
\label{fig:masks-init}
\end{figure}

Deep optics for depth perception can be very dependent on the initialization of optical parameters before training~\cite{phasecam}. To find the extent of the effect of mask initialization on performance, we varied the the initial phase masks while keeping number of masks, the optical model, and duration of training fixed. We trained for $200$ epochs. We tested four initializations of sequences of $5$ phase masks as shown in \autoref{fig:masks-init}. The first was uniformly distributed noise from $0$ to $1.2\times 10^{-6}$. The second was the first mask in the sequence set to a Fisher mask while the rest are uniform noise. The third is setting each mask to a rotation of the Fisher mask and adding Gaussian noise parameterized by a $5.35\times 10^{-7}$ mean and $3.05\times 10^{-7}$ standard deviation to $4$ masks. Lastly, we set each mask to a rotation of the Fisher mask and added noise to only the last two masks in the sequence. Of the four initializations, it is clear that the $3$ Fisher masks and $2$ Fisher masks with noise performed the best (\autoref{tab:depth-init}). 

\begin{table}[t]
\begin{center}
\begin{tabular}{|l|c|}
\hline
Initialization & RMSE$\downarrow$ \\
\hline\hline
$1$ Fisher + All noise  & 0.0329 \\
$1$ Fisher + Fisher w/ Noise & 0.0271 \\
All noise  & 0.0254 \\
$3$ Fisher + Fisher w/ Noise & \textbf{0.0207} \\
\hline
\end{tabular}
\end{center}
\caption{\textbf{Quantitative evaluation of phase mask initializations.} Four sequence initializations are evaluated on the monocular depth estimation task. Ultimately, $3$ Fisher masks and $2$ noisy Fisher masks have the best performance after training. }
\label{tab:depth-init}
\end{table}

\begin{figure}[t]
\begin{center}
   \includegraphics[width=0.8\linewidth]{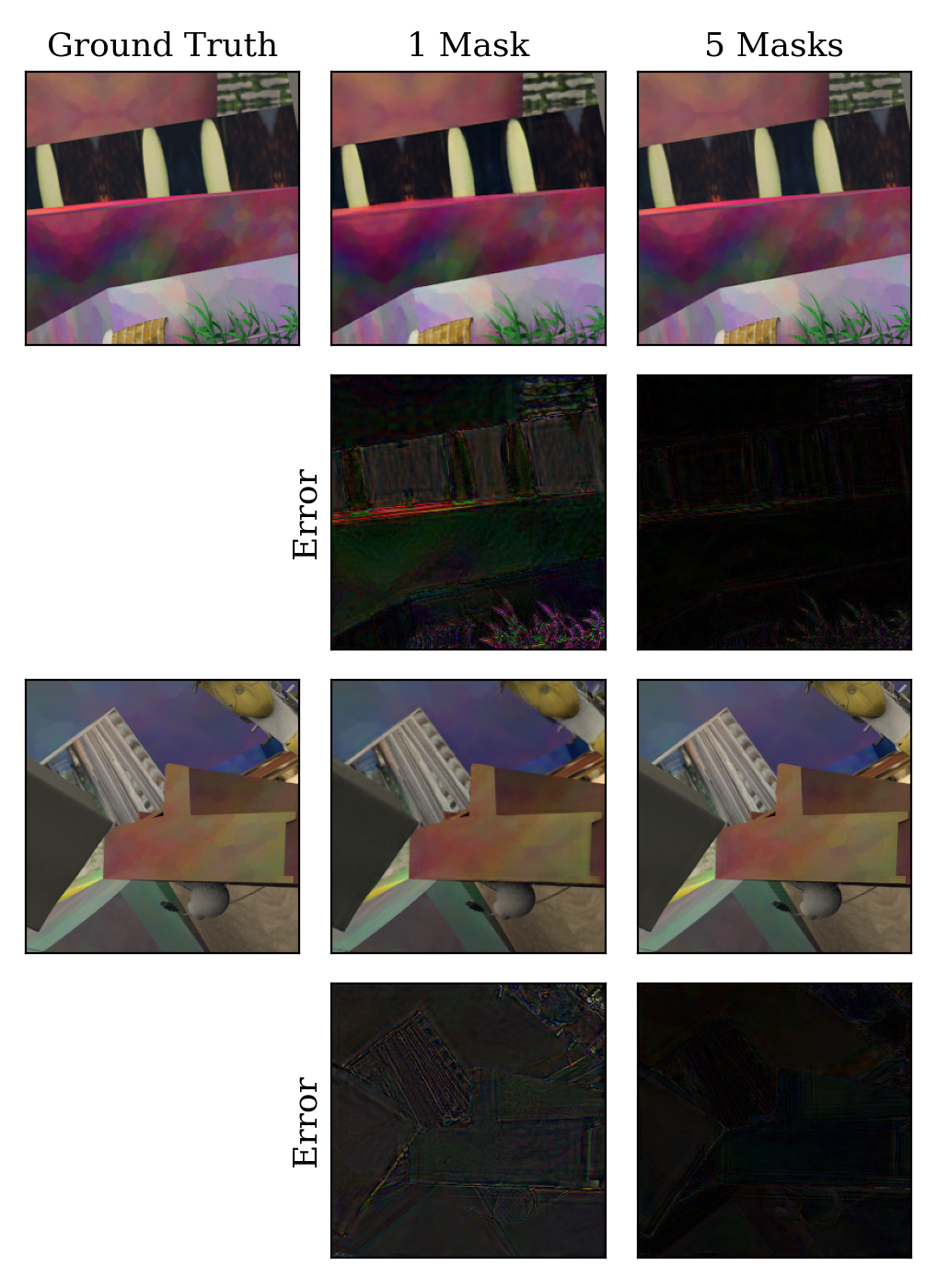}
\end{center}
   \caption{ \textbf{Qualitative results of a specialized network on extended depth-of-field imaging.} Both $1$ and $5$ phase mask systems are evaluated on FlyingThings3D. Error is computed pixel wise between the ground truth all-in-focus image and the reconstructed output and is boosted by a factor of $3$. Notice that the $5$ phase mask system introduces minimal error. }
\label{fig:edof-images}
\end{figure}

\begin{figure}[t]
\begin{center}
   \includegraphics[width=0.9\linewidth]{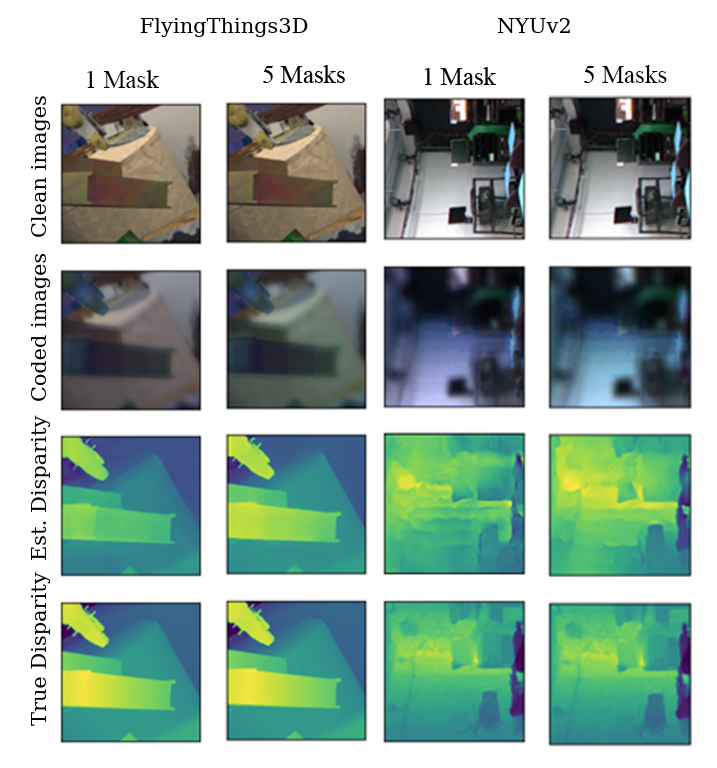}
\end{center}
   \caption{\textbf{Qualitative results of a specialized networks on monocular depth estimation.} Performance using the five phase mask method outperforms one phase mask on both datasets.}
\label{fig:depth-images}
\end{figure}

\begin{figure}[t]
\begin{center}
   \includegraphics[width=0.8\linewidth]{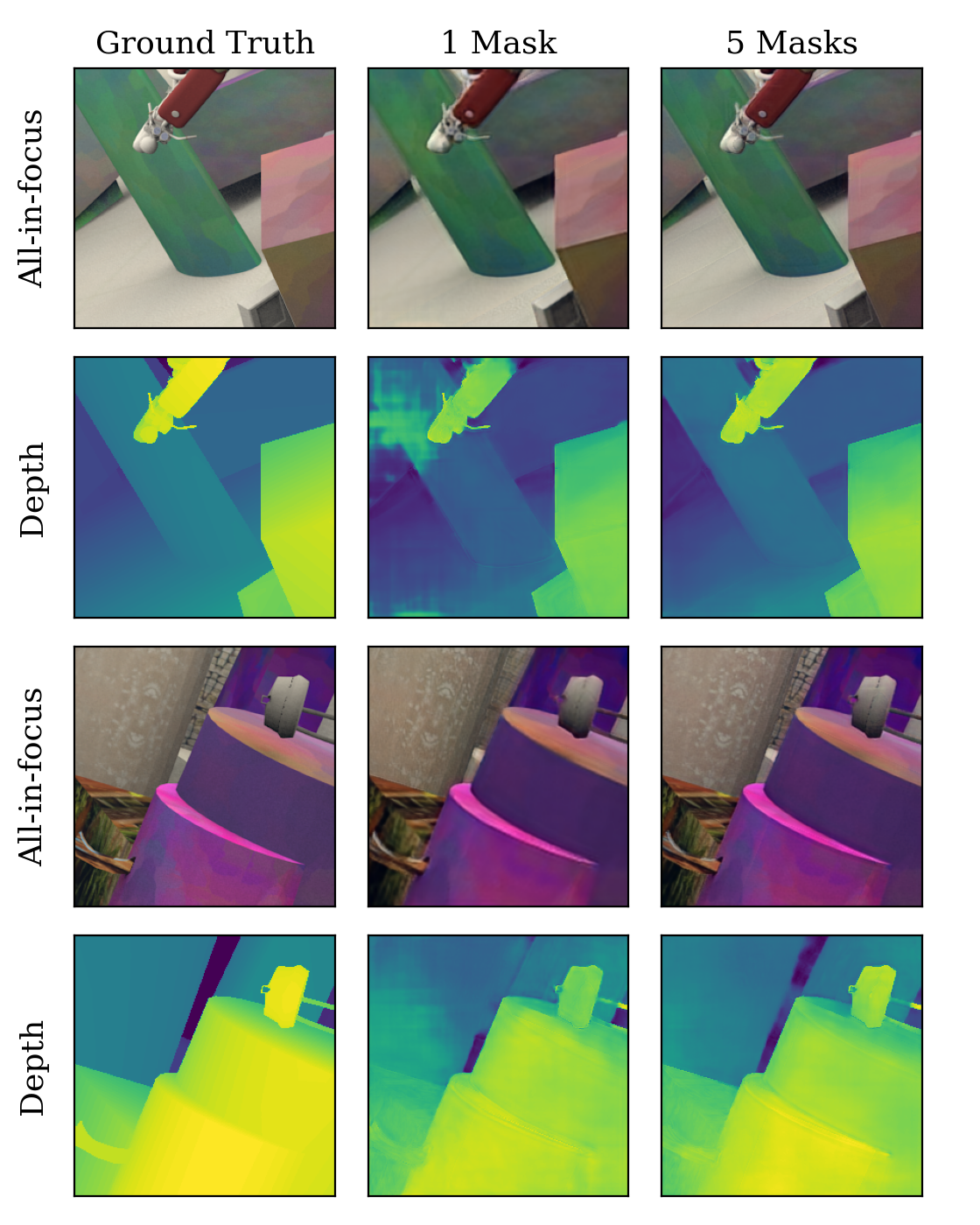}
\end{center}
   \caption{ \textbf{Qualitative results of a joint optimized system for extended depth-of-field imagining and monocular depth estimation.} Both one and five phase mask networks are evaluated on the FlyingThings3D datasets. Notice that five masks has fewer artifacts than a single mask.}
\label{fig:combo-images}
\end{figure}

\subsubsection{Modeling State Switching in SLMs}
Our optical forward model assumes an SLM can swap between two phase patterns instantly. In practice, however, some light will be captured during the intermediate states between phase patterns. These phase patterns, in the worst case, could be random phase patterns, effectively adding noise to our coded images. We model these intermediate states by averaging output images produced by phase masks and the randomized phase patterns weighted by the time that they are displayed for. We model the total exposure time as $100$ms, with various durations of switching times from $1$ to $16$ms per swap. We evaluate our joint optimized network on these new, more noisy, coded images without any additional training (\autoref{fig:switching}). Observe that because the $5$ phase mask system includes more swaps, performance degrades faster than fewer phase mask systems. However, for short switching times, $5$ phase masks still out performs the others without needing any fine tuning.
\begin{figure}[t]
\begin{center}
\includegraphics[width=0.9\linewidth]{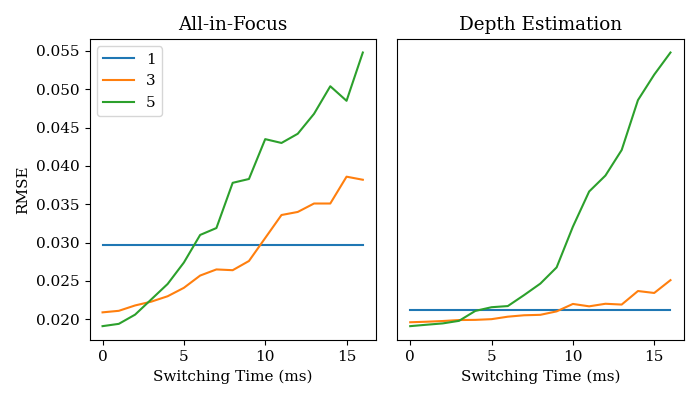}
\end{center}
   \caption{ \textbf{Effect of switching time on joint system performance.} Reconstruction error across phase mask counts as a function of switching time with 100ms overall exposure. Performance of the jointly optimized system degrades as the switching time between phase masks increases, as expected. Our system still performs well when the time spent switching is less than 25\% of the overall exposure. }
\label{fig:switching}
\end{figure}

\section{Results}

We compare our time averaged dynamic PSF method to the state-of-the-art methods for both extended depth-of-field imaging and monocular depth estimation.
The relevant works we compare to are as follows:
\begin{enumerate}
    \item PhaseCam3D \cite{phasecam} used a $23\times 23$ phase mask based on $55$ Zernike coefficients. The phase mask parameters were then end-to-end optimized with a U-Net reconstruction network to perform depth estimation.
    \item Chang et al. \cite{Chang:2019:DeepOptics3D} used a singlet lens introducing chromatic aberrations with radially symmetric PSFs. Similar to \cite{phasecam}, the lens parameters were also then end-to-end optimized.
    \item Ikoma et al. \cite{Ikoma:2021} used a radially symmetric diffractive optical element (DOE). The blurred image was preconditioned with an approximate inverse of the PSF depth dependent blur. The RGB image stack was fed into a U-Net to produce both an all-in-focus image and a depth map. The DOE and U-Net parameters were optimized in an end-to-end fashion.
    \item Liu et al. \cite{Liu:22} used various phase mask parameterizations with the same U-Net architecture as \cite{Ikoma:2021}. One method used pixel-wise height maps (PW) and the other introduced orbital angular momentum (OAM).
    \item Sitzmann et al. \cite{Sitzmann:2018} implements a single DOE based on Zernike coefficients, and solves the Tikhonov-regularized least-squares problem to reconstruct an all-in-focus image.
    \item MiDaS \cite{MIDAS-DPT} and 
ZoeDepth \cite{ZOE} are state of the art single shot monocular depth estimation methods with all-in-focus images as inputs.
\end{enumerate}
Because both \cite{Ikoma:2021} and \cite{Liu:22} simultaneously learn all-in-focus images and depth maps, when comparing against our specialized methods, we take their best performing weighting of each task.

\paragraph{Individual Tasks.}
For monocular depth estimation, our specialized method using a sequence of $5$ phase masks trained for $300$ epochs outperforms prior work on FlyingThings3D (\autoref{tab:depth-sota}). Additionally, our approach performs significantly better and achieves lower error than previous methods on NYUv2 without any additional fine tuning. For extended depth-of-field, our specialized method using a sequence of $5$ phase masks out performs prior work on FlyingThings3D (\autoref{tab:edof-sota}). This demonstrates the benefit of multi-phase mask learning on computational imaging tasks.

\begin{table}[t]
\begin{center}
\begin{tabular}{|l|c|c|}
\hline
Method & FlyingThings3D & NYUv2 \\
\hline\hline
 PhaseCam3D \cite{phasecam}  & 0.521  & 0.382\\
Chang et al. \cite{Chang:2019:DeepOptics3D} & 0.490 & 0.433\\
Ikoma et al. \cite{Ikoma:2021}  & 0.184  &  -\\
\hline
MiDaS \cite{MIDAS-DPT}  & -  &  0.357\\
ZoeDepth \cite{ZOE} & - & 0.277 \\
\hline
TiDy (1) & 0.026 & 0.259 \\
TiDy (5) & \textbf{0.019} & \textbf{0.175} \\
\hline
\end{tabular}
\end{center}
\caption{ \textbf{RMSE comparison of monocular depth estimation methods.} We present quantitative results on two datasets to compare to state of the art optical and single shot monocular depth estimation methods. Our methods performs best with our $5$ phase mask system achieving the lowest error on both datasets. }
\label{tab:depth-sota}
\end{table}

\begin{table}[t]
\begin{center}
\begin{tabular}{|l|c|c|c|}
\hline
Method & RMSE$\downarrow$ & PSNR$\uparrow$ & SSIM$\uparrow$ \\
\hline\hline
Liu et al. \cite{Liu:22} & - & 29.80 & -\\
Ikoma et al. \cite{Ikoma:2021} & 0.1327 & 31.88 & 0.905 \\
Sitzmann et al. \cite{Sitzmann:2018} & - & 32.44 & - \\
\hline
TiDy (1) & 0.0148 & 37.33 & 0.968 \\
TiDy (5) &  \textbf{0.0092} & \textbf{41.11} & \textbf{0.989} \\
\hline
\end{tabular}
\end{center}
\caption{ \textbf{Comparison of extended depth-of-field imaging methods.} We present quantitative results on FlyingThings3D to compare to state-of-the-art. Our methods performs best with our $5$ phase mask system achieving the best PSNR. }
\label{tab:edof-sota}
\end{table}

\paragraph{Multi-Objective Optimization.}
We also evaluate our method against other joint all-in-focus and depth map learning approaches. This problem is challenging because good depth cues to produce depth maps is antithetical to producing an all-in-focus image. Our combined $5$ phase mask trained for $300$ epochs approach outperforms prior jointly trained approaches (\autoref{tab:combo-sota}).

\begin{table}[t]
\begin{center}
\begin{tabular}{|l|c|c|}\hline
        & {All-in-focus} & Depth \\
Method & PSNR$\uparrow$ & RMSE$\downarrow$  \\\hline\hline
Ikoma et al. \cite{Ikoma:2021} & 31.88 & 0.191 \\
Liu et al. \cite{Liu:22} - PW & 29.80 & 0.056\\
Liu et al. \cite{Liu:22} - OAM$_t$ & 25.86 & 0.053\\
\hline
TiDy (1) & 31.20 & 0.052 \\
TiDy (5) & \textbf{34.79} & \textbf{0.034} \\
\hline
\end{tabular}
\end{center}
\caption{ \textbf{Comparison of multi-objective optimization of extended depth-of-field imaging and depth estimation methods.} We compare quantitative results on FlyingThings3D to the state-of-the-art. Our methods performs best with our $5$ phase mask system achieving the best balance between objectives. }
\label{tab:combo-sota}
\end{table}

\section{Limitations}

While we were successful in learning dynamic phase masks to improve state-of-the-art performance on imaging tasks, our method still carries some limitations. First, our optical model assumes perfect switching between phase masks during training. While evaluation with non-zero switching times showed little degradation of performance, accounting for state switching while training could produce phase masks that are more performant. Our optical model also simulates depths as layered masks over an image, which does not account for blending at depth boundaries. Additionally, our method assumes that scenes are static for the duration of a single exposure. Lastly, though their prices are falling, SLMs are still quite expensive and bulky.
\vspace{-5pt}

\section{Conclusion}

This work is founded upon the insight that the set of PSFs that can be described by a single phase mask is non-convex and that, as a result, time-averaged PSFs are fundamentally more expressive.
We demonstrate that one can learn a sequence of phase masks that, when one dynamically switches between them over time, can substantially improve computational imaging performance across a range of tasks, including depth estimation and all-in-focus imaging. 
Our work unlocks an exciting new direction for PSF engineering and computational imaging system design.

\section*{Acknowledgements}
C.M. was supported in part by the AFOSR Young Investigator Program Award FA9550-22-1-0208.

{\small
\bibliographystyle{ieee_fullname}
\bibliography{egbib}
}

\newpage

\foreach \x in {1,...,\numbersupplementpages}
{
    \clearpage
    \includepdf[pages={\x}]{\supplementfilename}
}

\end{document}